\relax
\documentclass[letterpaper]{article} 
\usepackage{aaai19}  
\usepackage{times}  
\usepackage{helvet}  
\usepackage{courier}  
\usepackage{url}  
\usepackage{graphicx}  

\frenchspacing  
\setlength{\pdfpagewidth}{8.5in}  
\setlength{\pdfpageheight}{11in}  
  \pdfinfo{
/Title (State-Augmentation Transformations for Risk-Sensitive Reinforcement Learning)
/Author (Shuai Ma, Jia Yuan Yu)}
\setcounter{secnumdepth}{0} 

\usepackage{amsmath,amssymb,amsthm,dsfont,algorithm,algorithmic,subfig,xcolor}
\theoremstyle{plain}
\newtheorem{theorem}{Theorem} 

\newtheorem{corollary}[theorem]{Corollary}

\theoremstyle{definition}
\newtheorem{definition}[theorem]{Definition}

\theoremstyle{remark}
\newtheorem{remark}{Remark}

\DeclareMathOperator{\supp}{supp}
 
\begin{document}
%
\title{State-Augmentation Transformations for Risk-Sensitive Reinforcement Learning}
\author{Shuai Ma \ and Jia Yuan Yu\\
Concordia Institute of Information System Engineering, Concordia University\\
1455 De Maisonneuve Blvd. W., Montreal, Quebec, Canada H3G 1M8\\
m\_shua@encs.concordia.ca, jiayuan.yu@concordia.ca\\
}
\maketitle
\begin{abstract}
In the framework of MDP, although the general reward function takes three arguments---current state, action, and successor state; it is often simplified to a function of two arguments---current state and action. The former is called a \textit{transition-based} reward function, whereas the latter is called a \textit{state-based} reward function. 
When the objective involves the expected total reward only, this simplification works perfectly. However, when the objective is risk-sensitive, this simplification leads to an incorrect value. 
We propose three successively more general state-augmentation transformations (SATs), which preserve the reward sequences as well as the reward distributions and the optimal policy in risk-sensitive reinforcement learning. 
In risk-sensitive scenarios, firstly we prove that, for every MDP with a stochastic transition-based reward function, there exists an MDP with a deterministic state-based reward function, such that for any given (randomized) policy for the first MDP, there exists a corresponding policy for the second MDP, such that both Markov reward processes share the same reward sequence. 
Secondly we illustrate that two situations require the proposed SATs in an inventory control problem. One could be using Q-learning (or other learning methods) on MDPs with transition-based reward functions, and the other could be using methods, which are for the Markov processes with a deterministic state-based reward functions, on the Markov processes with general reward functions.
We show the advantage of the SATs by considering Value-at-Risk as an example, which is a risk measure on the reward distribution instead of the measures (such as mean and variance) of the distribution. 
We illustrate the error in the reward distribution estimation from the reward simplification, and show how the SATs enable a variance formula to work on Markov processes with general reward functions.

\end{abstract}

\section{Introduction}
The most widely used optimization criterion in reinforcement learning (RL) is represented by the expected total reward.
This risk-neutral criterion renders two functions---the value function and the Q-function---important in many applications.
The two functions can be considered as substitutes for the reward functions, 
even in risk-sensitive reinforcement learning~\cite{borkar2002q,shen2014risk,garcia2015comprehensive,junges2016safety,gilbert2016quantile,huang2017risk,chow2017risk,berkenkamp2017safe}.
However, the reward function is usually in a more complicated form. 
In a Markov decision process (MDP), we say the reward function is \textit{transition-based} if it involves current state, action, and successor state, and \textit{state-based} if it involves current state and action only.
Furthermore, the reward could be stochastic.
For a (stochastic) transition-based reward function, the value function, or Q-function, implies a reward simplification, which changes the reward sequence $ \{ R_t \} $, and only preserves the mean of the reward distribution. 
This may not be sufficient in many risk-sensitive applications, especially when the small probability events have serious consequences, such as self-driving and medical diagnosis.
That is where a risk-sensitive criterion should be considered.
A risk-sensitive criterion refers to a risk measure, or a risk function, which maps a random variable to a scalar.
Since the risk measure depends on the distribution of the random variable, the reward simplification will lead to an incorrect result.

The motivation for our proposed methods arises from a contradiction between theory and practice with respect to the reward function.
On the one hand, some methods require MDPs, or Markov reward processes, to be with deterministic (and state-based) reward functions.
On the other hand, for many practical problems, the underlying MDPs or Markov processes have stochastic (and transition-based) reward functions, and the reward simplification changes the reward sequences, as well as the reward distributions, which are crucial in risk-sensitive scenarios.
This paper proposes three successively more general state-augmentation transformations (SATs) for different settings (Cases 1, 2, and 3) to solve this problem.
With the aid of the SATs, we can apply methods, which requires a deterministic state-based reward function, to MDPs (or Markov processes) with stochastic transition-based reward function, and at the same time preserve the reward sequence (and the reward distribution). 
We study the return---the discounted total reward $\sum_{t=0}^{\infty} \gamma^{t-1} R_t $---in an infinite-horizon MDP with finite state and action spaces, and consider the Value-at-Risk (VaR) objective as a risk-sensitive example. 
We generalize the transformation in~\cite{Ma2017STT} to three successively more general SATs (Cases 1, 2, and 3), give a proof for the most general one, and illustrate the error from the reward simplification on the return distribution. 

In Section 2, we define the notations for the MDPs with four types of reward functions and two policy spaces, and pin down the reward simplification as the main problem in a risk-sensitive setting. We consider two VaR objectives to show the effect of the reward simplification, since VaRs are functions of the reward distribution instead of measures (moments, such as mean and variance) of the distribution.
An infinite-horizon MDP is defined for an inventory control problem, which is used as an example for the proposed transformations. 
The return variance formula requires the Markov process to be with a deterministic reward function.
In Section 3, we propose SATs for different cases, and show the error from the reward simplification.
In Section 4, we give a literature review on risk study in reinforcement learning and risk-aware Q-learning. 
In Section 5, we have a discussion on the proposed transformations. 

Briefly, in a policy evaluation setting, when the objective is risk-sensitive and the Markov reward process is with a (stochastic) transition-based reward function, the return distribution should be preserved instead of the expectation only, and an appropriate SAT should be applied.
In a control setting, when a randomized policy is considered, we can apply the SAT in Case 3 to preserve the reward sequence within a specific policy space.
For related studies which concerned risk-sensitive problems in RL, when the reward function is not deterministic and state-based, we believe that they should be revisited with proposed transformations. 

\section{Preliminaries and Notations}
In this section, firstly we present the notations for MDPs with four types of reward functions and two policy space. 
Secondly, we define two VaR objectives and the VaR function. 
Thirdly, we consider an inventory control problem, which is a straightforward example of MDP with a transition-based reward function.
\subsection{Markov Decision Processes}

In this paper we focus on infinite-horizon discrete-time MDPs, which can be represented by
\[
\langle S, A, r, p, \mu , \gamma \rangle,
\]
in which
$S $ is a finite state space, and $X_t \in S$ represents the state at (decision) epoch $t \in \mathbb{N}$;
$A_x$ is the allowable action set for $x \in S$, $A = \bigcup_{x \in S}A_x$ is a finite action space, and $K_t \in A$ represents the action at epoch $t$; 
$r$ is a bounded reward function, and $R_t$ denotes the immediate reward at epoch $t$;
$p(y \mid x, a) = \mathbb{P}(X_{t+1}=y \mid X_t = x, K_t = a)$ denotes the homogeneous transition probability;
$\mu$ is the initial state distribution; $ \gamma \in (0,1)$ is the discount factor.

In this paper we study the distribution of the return $ \sum_{t=1}^{\infty} \gamma^{t-1} R_t$ in infinite-horizon MDPs. 
For $ x, y \in S, a \in A_x $, here we consider four types of reward functions: the deterministic state-based reward 
\begin{equation*}
	r_{DS}(x,a) \in \mathbb{R};
	\label{rewardTypeDS}
\end{equation*} 
the deterministic transition-based reward
\begin{equation*}
	r_{DT}(x,a,y) \in \mathbb{R};
	\label{rewardTypeDT}
\end{equation*} 
the stochastic state-based reward 
\begin{equation*}
	r_{SS}(j \mid x,a)=\mathbb{P}(R_t = j \mid X_t=x, K_t= a) \in [0,1];
	\label{rewardTypeSS}
\end{equation*} 
and the stochastic transition-based reward 
\begin{equation}
\begin{aligned}
	&r_{ST}(j \mid x,a,y) \\
	&= \mathbb{P}(R_t = j \mid X_t=x, K_t= a, X_{t+1}=y) \in [0,1].
\end{aligned}
\label{rewardTypeST}
\end{equation}
With a slight abuse of notation, we also represent, for example, $ r_{ST}(j \mid x,y) = \mathbb{P}(R_t = j \mid X_t=x, X_{t+1}=y) \in [0,1] $ for a Markov reward process.

When the reward function is not $ r_{DS} $ type, it is often simplified in the expectation way. For example, given a $r_{DT}$, the reward function can be simplified to a $ r_{DS} $ by 
\begin{equation}
	r_{DS}(x,a) = \sum_{y \in S} p(y \mid x,a) r_{DT}(x,a,y),
	\label{simplifiedR}
\end{equation} 
where $ x,y \in S, a \in A_x $. In practical problems, stochastic reward functions are often naively simplified to $ r_{DS} $ functions in a similar way.
In RL, when the expected return is considered, and the Q-function or the value function is accessed, which implies such a reward simplification. 
The effect of the reward simplification on return distribution in a finite-horizon Markov reward process has been studied in~\cite{Ma2017STT}. 
Here we estimate the distribution with assuming it is approximately normal, illustrate the similar effect on return distribution, and generalize the transformation for more practical cases. 

A policy $\pi$ describes how to choose actions sequentially.
For infinite-horizon MDPs, we focus on two stationary Markovian policy spaces: the deterministic policy space $ \Pi_D $, and the randomized policy space $ \Pi_R $. A (time-homogeneous) Markov reward process is tantamount to an MDP with a (randomized) policy. 
Randomized policy is often considered in constrained MDPs~\cite{altman1999constrained}. Given an MDP with a randomized policy, the reward function is often naively simplified as well. 
Since most risk measures are law invariant~\cite{kusuoka2001law}, we generalize the transformation for settings mentioned above, in order to preserve the return distribution.

\subsection{Value-at-Risk}
Value-at-Risk originates from finance. For a given portfolio (which can be considered as an MDP with a policy), a loss threshold (target level), and a time-horizon, VaR concerns the probability that the loss on the portfolio exceeds the threshold over the time horizon. VaR is hard to deal with since it is not a coherent risk measure~\cite{Riedel2004}. 
Two VaR problems described in~\cite{Filar1995b} are considered as optional objectives. 
Given an initial distribution $\mu$ and a policy $\pi $ in a specified policy space $ \Pi $, denote the return by $\Phi^{\pi}_{\mu} $, and here we simplify it to $\Phi$. Denote the return distribution with the policy $\pi$ by $F^{\pi}_{\Phi}$. VaR addresses the following problems. 
\begin{definition}
	Given a quantile $\alpha \in [0,1]$, find the optimal threshold $\rho_{\alpha} =  \sup\{\tau \in \mathbb{R} \mid  \mathbb{P}(\Phi > \tau) \ge \alpha, \pi \in \Pi\}=\sup\{\tau \in \mathbb{R} \mid  F^{\pi}_{\Phi}(\tau) \le 1-\alpha, \pi \in \Pi\}$.
\end{definition}	
This problem refers to the quantile function, i.e., $F^{\pi -1}_{\Phi}$.
\begin{definition}
	Given a threshold $\tau \in \mathbb{R}$, find the optimal quantile $\eta_{\tau} = \sup\{\alpha \in [0,1] \mid  F^{\pi}_{\Phi}(\tau) \le 1-\alpha, \pi \in \Pi\}$.
	\label{VaR2}
\end{definition}
This problem concerns $F^{\pi}_{\Phi}$. 
When the estimated return distribution is strictly increasing, any point along the function $\inf\{F^{\pi}_{\Phi}  \mid  \pi \in \Pi\}$ is (estimated) $(\rho_{\alpha}, 1-\eta_{\tau})$ with $\tau = \rho_{\alpha}$ or $\alpha = 1-\eta_{\tau}$. Therefore, both VaR objectives refers to the infimum function, and here we call it the \textit{VaR function}. 
Since most risk measures are law invariant, we consider VaR as an example to show the effect on the distribution from the reward simplification. 

\subsection{An MDP for Inventory Control Problems} 
We constructs an MDP for a single-product stochastic inventory control problem based on~\cite[Section 3.2.1]{Puterman1994a}.
Define the inventory capacity $M \in \mathbb{N}^+$, and the state space $S=\{0, \cdots, M\}$. Briefly, at time epoch $t \in \mathbb{N}$, denote the inventory level by $X_t$ before the order, the order quantity by $K_t \in \{0, \cdots, M-X_t\}$, the demand by $D_t$ with a time-homogeneous probability distribution $\mathbb{P}(D_t=i)$, where $i \in \{0, \cdots, M\}$, then we have $X_{t+1}=\max\{X_t+K_t-D_t,0\}$. 

For $x \in S$, denote the cost to order $x$ units by $c(x)$, a fixed cost $W \ge 0$ for placing orders, then we have the order cost $o(x) = (W+c(x)) \mathds{1}_{[x>0]}$.
Denote the revenue when $x$ units of demand is fulfilled by $f(x)$, the maintenance fee by $ m(x) $. The real reward function is $r(X_t,K_t,X_{t+1}) = f(X_t+K_t-X_{t+1})-o(K_t) - m(X_t)$. 

We set the parameters as follows. The fixed order cost $W=4$, the variable order cost $c(x)=2x$, the maintenance fee $ m(x) = x $, the warehouse capacity $M=2$, and the price $f(x)=8x$. The probabilities of demands are $\mathbb{P}(D_t=0)=0.25$, $\mathbb{P}(D_t=1)=0.5$, $\mathbb{P}(D_t=2)=0.25$ respectively. The initial distribution $\mu(0) = 1$. 
In this infinite-horizon MDP, the reward function is deterministic and transition-based. The simplified reward function $r'$ can be calculated by Equation~(\ref{simplifiedR}), which is state-based. 
As illustrated in Figure~\ref{MdpDesc}, now we have two MDPs with different reward functions: $\langle S, A, r, p, \mu, \gamma \rangle$ and $\langle S, A, r', p, \mu, \gamma \rangle$. 

\begin{figure}[h] 
	\centering
	\includegraphics[scale=0.9]{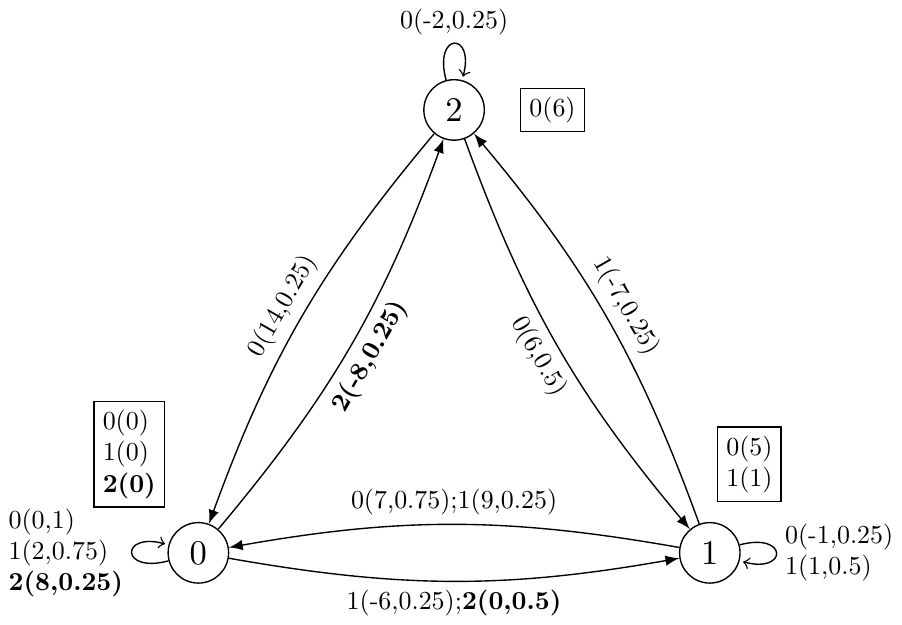} 
	\caption{An MDP with a transition-based reward function and its counterpart with a state-based reward function following reward simplication. 
Labels along transitions denote $a(r(x,a,y),p(y | x,a))$, and labels next to states denote \boxed{a(r'(x,a))}, the state-based reward function simplified with Equation~(\ref{simplifiedR}). 
For example, the labels in bold are interpreted as follows: the label $2(0,0.5)$ below the transition from 0 to 1 means that the reward $r(0,2,1)=0$ and the transition probability is 0.5; the label \boxed{2(0)} near state 0 means when $X_t = 0$ and $K_t=2$, the simplified reward $r'(0,2) = 0$.}  

	\label{MdpDesc}
\end{figure}

\subsection{Reward Distribution Estimation}
Since we consider risk from the distributional perspective instead of the expected return, we need to estimate the reward distribution.
The functional distribution estimations in Makrov reward processes have been studied for decades~\cite{woodroofe1992central,meyn2012markov}. 
However, there is no related central limit theorem for the discounted sum of rewards (return). 
To simplify the estimation, we estimate the return distribution by considering mean and variance only, which implies the assumption that the return is approximately normal. 
%
%
%
For an infinite-horizon Markov reward process with a deterministic state-based reward function, Sobel~\shortcite{sobel1982variance} presented the return variance formula for Markov reward processes.

\begin{theorem}~\cite{sobel1982variance} 
	Given an infinite-horizon Markov reward process $\langle S, r'_{\pi}, p_{\pi}, \gamma \rangle$ with the finite state space $ S = \{1, \cdots, |S|\} $, the reward function $r'_{\pi}$ deterministic state-based and bounded, and the discount factor $ \gamma \in (0,1)$. 
	Denote the transition matrix by $ P $, in which $ P(x,y) = p_{\pi}(y \mid x), x,y \in S $.
	Denote the conditional return expectation by $v_x = \mathbb{E}(\Phi  \mid  X_0 = x)$ for any deterministic initial state $x \in S$, and the conditional expectation vector by $v$. Similarly, denote the conditional return variance by $\psi_x = \mathbb{V}(\Phi \mid X_0 = x)$, and the conditional variance vector by $\psi$. 
	Let $\theta$ denote the vector whose $x$th component is $\theta_x = \sum_{y \in S}p_{\pi}(x,y) (r'_{\pi}(x) + \gamma v_y)^2 - v^2_x$. 
	Then
	\begin{gather*}
		v = r' + \gamma P v = (I - \gamma P)^{-1} r', \\ 
		\psi = \theta + \gamma^2 P \psi = (I - \gamma^2 P)^{-1} \theta.
	\end{gather*}
	\label{theorem1982}
\end{theorem}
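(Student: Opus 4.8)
The plan is to exploit the one-step self-similar structure of the return. Write $\Phi = r'_{\pi}(X_0) + \gamma\,\Phi^{(1)}$, where $\Phi^{(1)}$ is the return generated by the process restarted from epoch $1$. Because the chain is time-homogeneous and Markov, conditionally on $X_1 = y$ the variable $\Phi^{(1)}$ has the same law as $\Phi$ conditioned on $X_0 = y$, and is conditionally independent of $X_0$ given $X_1$; this law-invariance is the structural fact everything rests on. Boundedness of $r'_{\pi}$ together with $\gamma \in (0,1)$ gives $|\Phi| \le \|r'_{\pi}\|_{\infty}/(1-\gamma)$, so $v_x$ and $\psi_x$ are finite for every $x$ and the interchanges of expectation with the infinite sum used below are legitimate.

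For the mean, I would condition on $X_1$ and apply the tower property: $v_x = \mathbb{E}[\, r'_{\pi}(X_0) + \gamma\Phi^{(1)} \mid X_0 = x\,] = r'_{\pi}(x) + \gamma\sum_{y \in S} p_{\pi}(x,y)\, v_y$, which is the vector identity $v = r' + \gamma P v$. Since $P$ is row-stochastic its spectral radius equals $1$, hence $\gamma P$ has spectral radius $\gamma < 1$ and $I - \gamma P$ is invertible; rearranging yields $v = (I - \gamma P)^{-1} r'$, the unique solution.

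For the variance, I would invoke the law of total variance, conditioning on $X_1$:
\[
\psi_x = \mathbb{E}\!\left[\mathbb{V}(\Phi \mid X_0 = x, X_1)\mid X_0 = x\right] + \mathbb{V}\!\left(\mathbb{E}[\Phi \mid X_0 = x, X_1]\mid X_0 = x\right).
\]
On the event $\{X_0 = x, X_1 = y\}$ one has $\Phi = r'_{\pi}(x) + \gamma\Phi^{(1)}$ with $\Phi^{(1)}$ distributed as the return started at $y$, so the inner conditional variance is $\gamma^2\psi_y$ and the inner conditional mean is $r'_{\pi}(x) + \gamma v_y$. The first term of the decomposition is therefore $\gamma^2\sum_{y} p_{\pi}(x,y)\psi_y$, i.e. the $x$th entry of $\gamma^2 P\psi$. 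For the second term, since $\mathbb{E}_{y\sim p_{\pi}(x,\cdot)}[\,r'_{\pi}(x) + \gamma v_y\,] = v_x$ by the mean identity, the variance of $r'_{\pi}(x) + \gamma v_y$ over $y$ equals $\sum_{y} p_{\pi}(x,y)\bigl(r'_{\pi}(x) + \gamma v_y\bigr)^2 - v_x^2 = \theta_x$. Combining the two terms gives $\psi = \theta + \gamma^2 P\psi$, and because $\gamma^2 < 1$ the matrix $I - \gamma^2 P$ is invertible, so $\psi = (I - \gamma^2 P)^{-1}\theta$.

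The only genuine subtlety — the main obstacle — is the rigorous justification that, conditionally on $X_1 = y$, the tail return $\Phi^{(1)}$ is equal in distribution (and conditionally independent of $X_0$ given $X_1$) to the return of the chain restarted at $y$: this is the Markov property applied at the deterministic epoch $1$ of a time-homogeneous chain, but it must be stated carefully because $\Phi^{(1)}$ is a functional of the entire infinite future trajectory, so one argues via finite truncations $\Phi^{(1)}_N = \sum_{t=1}^{N}\gamma^{t-1}r'_{\pi}(X_t)$ and passes to the limit using the uniform bound above. Everything else — finiteness of moments, the tower and total-variance identities, and the invertibility of $I - \gamma P$ and $I - \gamma^2 P$ — is routine bookkeeping.
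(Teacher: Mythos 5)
Your proposal is correct: the paper gives no proof of its own for this theorem (it is quoted from Sobel 1982), and your one-step decomposition $\Phi = r'_{\pi}(X_0) + \gamma\Phi^{(1)}$ with the tower property for $v$ and the law of total variance for $\psi$, plus invertibility of $I-\gamma P$ and $I-\gamma^2 P$ from the spectral radius bound, is essentially the standard argument of the cited source. No gaps.
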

Now with the aid of Theorem~\ref{theorem1982}, we can estimate the return distribution for the ergodic Markov reward process. Notice that the variance formula is for Markov reward process with a deterministic reward function only. 
In next section, we generalize the transformation for the Markov reward processes and MDPs in different cases, estimate the return distribution in the inventory control problem with the aid of a transformation, and compare it with the one from the reward simplification.

\section{State-Augmentation Transformations} 
In this section, we propose the state-augmentation transformations (SATs) for three cases. 
\begin{itemize}
	\item Case 1: a Markov reward process with a stochastic, transition-based reward function; 
	\item Case 2: an MDP with a stochastic transition-based reward function, and a randomized policy; and
	\item Case 3: an MDP with a randomized policy space.
\end{itemize}

Case 1 can be considered as an MDP with a $ r_{ST} $ (or $r_{SS} $) and a deterministic policy.  
Case 2 refers to the constrained MDPs. 
Case 3 describes a direct policy search (gradient descent method, for example) scenario from a risk-sensitive perspective.
In all the three cases, the reward functions are often simplified in a similar way as in Equation~(\ref{simplifiedR}), which will lose all moment information except for the first one (mean).
Noticing that the state-transition transformation~\cite{Ma2017STT} is for a Markov reward process with a deterministic, transition-based reward function, which we define as Case 0.
Since Case 0 is a special case of Case 1, we denote this relationship by $ \textit{Case 0} \prec \textit{Case 1} $. 
Similarly, the four cases have the relationship 
\begin{equation*}
\textit{Case 0} \prec \textit{Case 1} \prec \textit{Case 2} \prec \textit{Case 3}.
\label{CaseRelation}
\end{equation*}
Concisely, we review the original state-transition transformation in Algorithm~\ref{STT}, give a theorem (Theorem~\ref{TransMDP}) and its proof for the most general Case 3, and two corollaries (Corollary~\ref{tC2}, Corollary~\ref{tC1}) for Case 2 and 1. 
Considering the relationship between the cases, all needed algorithms and theorems for different cases can be derived from the constructive proof for Theorem~\ref{TransMDP} with some slight changes.

\begin{figure*}[t]
	\centering
	\setlength\fboxsep{0pt}
	\setlength\fboxrule{0.25pt}	\centering
	\subfloat[]{%
		\resizebox*{6.5cm}{!}{\includegraphics{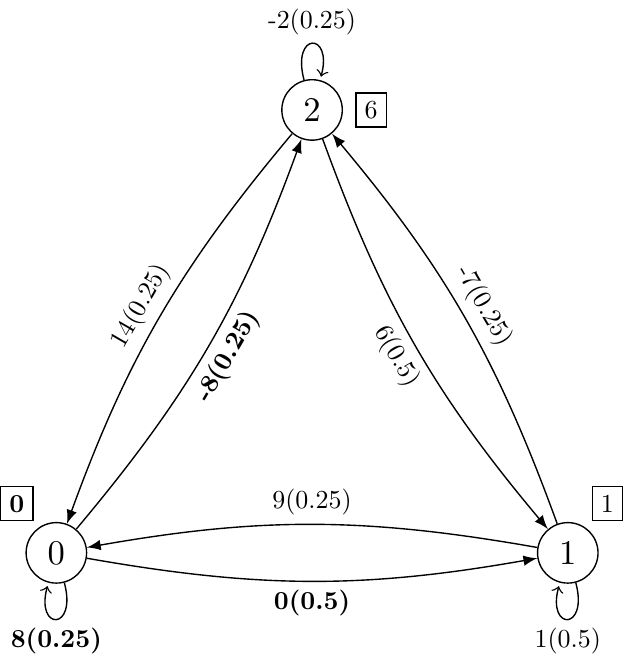}}}\hspace{50pt} 
	\label{Mrp0}
	\subfloat[]{%
		\resizebox*{7.2cm}{!}{\includegraphics{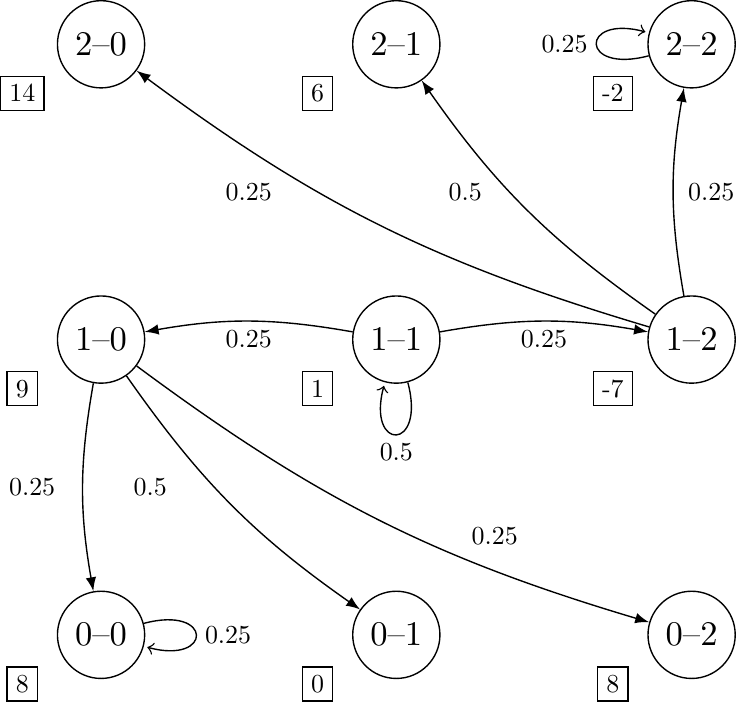}}} 
	\caption{(a) The Markov reward process for the MDP in Figure~\ref{MdpDesc} with the policy $ \pi $. This Markov process has a transition-based reward function. The labels along transitions denote $r_{\pi}(x,y)(p_{\pi}(x,y))$, and the labels $\boxed{r'_{\pi}(x)}$ near states denote the state-based reward function simplified with Equation~(\ref{simplifiedR}); (b) The transformed Markov reward process with a state-based reward function. For a Markov reward process with a deterministic transition-based reward function, the transformation takes transitions as states and attach each possible reward to a state, in order to preserve the reward sequence. The labels along transitions denote $p^{\dagger}_{\pi}(x^{\dagger},y^{\dagger})$, and the labels $\boxed{r^{\dagger}(x^{\dagger})}$ near states denote the state-based reward function from the transformation.}
	\label{tMrp}
\end{figure*}

\subsection{SAT for Case 3}
We give the transformation theorem for MDPs with a $ r_{ST} $ (Equation~(\ref{rewardTypeST})) in a control setting and prove it as follows.

\begin{theorem} [Transformation for MDPs]
	Given an MDP $\langle S, A, r, p, \mu \rangle$ with $r$ stochastic and transition-based, there exists an MDP $\langle S^{\dagger}, A, r^{\dagger}, p^{\dagger}, \mu^{\dagger} \rangle$ with $r^{\dagger}$ deterministic and state-based, such that for any given policy (possibly randomized) for $\langle S, A, r, p, \mu \rangle$, there exists a corresponding policy for $\langle S^{\dagger}, A, r^{\dagger}, p^{\dagger}, \mu^{\dagger} \rangle$, such that both Markov reward processes share the same return distribution.
	\label{TransMDP}
\end{theorem}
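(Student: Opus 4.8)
\emph{Plan.} The plan is to augment the state with precisely the data that makes the stochastic transition-based reward deterministic and state-based---a realized transition together with its realized reward value---and then to read the reward directly off the augmented state. Concretely, I would take the augmented states to be the tuples $(x,a,y,j)$ with $x,y\in S$, $a\in A_x$ and $j\in\supp r_{ST}(\cdot\mid x,a,y)$, let the admissible actions at $(x,a,y,j)$ be $A_y$, and define
\[
r^{\dagger}\big((x,a,y,j)\big)=j ,
\]
which is deterministic and state-based as required. The transition kernel would be obtained by ``splitting'' each original move into the next transition together with its reward realization,
\[
p^{\dagger}\big((y,a',y',j')\mid (x,a,y,j),a'\big)=p(y'\mid y,a')\,r_{ST}(j'\mid y,a',y'),
\]
set to zero whenever the leading component of the target differs from the current-state component $y$; and $\mu^{\dagger}$ would be the law of $(X_1,K_1,X_2,R_1)$ under $\mu$ and the given policy (its one-step push-forward).

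\emph{Policy correspondence and coupling.} Given a possibly randomized stationary Markov policy $\pi$ for the original MDP, I would let the corresponding augmented policy ignore everything but the current-state component, $\pi^{\dagger}\big(a'\mid (x,a,y,j)\big)=\pi(a'\mid y)$, which is a legitimate stationary Markov policy on the augmented MDP. The key step is an induction on $t$ showing that the augmented Markov reward process $\{X^{\dagger}_t\}$ under $\pi^{\dagger}$ has the same joint law, over $t$, as the sequence $\{(X_t,K_t,X_{t+1},R_t)\}$ generated by the original MDP under $\pi$: the base case is the definition of $\mu^{\dagger}$, and the inductive step matches the one-step conditional kernels, using that $\pi^{\dagger}$ reproduces $\pi(\cdot\mid y)$ and $p^{\dagger}$ reproduces $p(\cdot\mid y,a')\,r_{ST}(\cdot\mid y,a',\cdot)$. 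Since $\pi^{\dagger}$ and $p^{\dagger}$ depend on $(x,a,y,j)$ only through $y$, the augmented process is genuinely Markov, so it is indeed the Markov reward process of a bona fide MDP rather than of a history-dependent one. Reading $r^{\dagger}(X^{\dagger}_t)=R_t$ off the augmented state then identifies the two reward sequences in distribution, hence the discounted sums $\Phi^{\dagger}$ and $\Phi$ in distribution, which is the claimed equality of return distributions (in fact the stronger equality in law of the entire reward sequence).

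\emph{Where the difficulty lies.} In the risk-neutral case the statement is trivial, since only the per-step mean has to be matched; here the point is to preserve the \emph{entire joint distribution} of $\{R_t\}$, which is exactly what forces the reward realization $j$ into the state and what makes the inductive matching of conditional kernels---rather than of expectations---the crux of the argument. The second delicate point concerns the first step: the first reward already depends on the first action and the first transition, so $\mu^{\dagger}$ must carry that randomness; if one additionally wants the transformed data $(S^{\dagger},A,r^{\dagger},p^{\dagger})$ to be fixed independently of $\pi$, one interposes an auxiliary pre-decision copy of $S$ (with deterministic reward $0$) that absorbs the first action choice, after which the two reward sequences agree up to a one-epoch shift. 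Everything else is routine bookkeeping, and I expect the corollaries for Cases~1 and~2 to fall out by specialization---dropping the action component, or fixing a deterministic policy.
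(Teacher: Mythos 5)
Your proposal is essentially the paper's own proof: the same augmented states $(x,a,y,j)$ carrying the realized reward, the same kernel $p^{\dagger}\big((y,a',y',j')\mid(x,a,y,j),a'\big)=p(y'\mid y,a')\,r(j'\mid y,a',y')$, the same policy correspondence $\pi^{\dagger}(\cdot\mid(x,a,y,j))=\pi(\cdot\mid y)$, and the same induction matching path probabilities; your ``auxiliary pre-decision copy'' of $S$ with zero reward is exactly the paper's null-state set $W$, which is indeed needed because the theorem's quantifier order requires $\mu^{\dagger}$ to be policy-independent, ruling out your primary push-forward choice. The only detail you leave implicit is that in this null-state variant the one-epoch shift of the reward sequence must be compensated (the paper rescales $r^{\dagger}$ by $1/\gamma$) so that the discounted return distributions coincide exactly rather than up to a factor $\gamma$.
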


\begin{proof}
	The proof has two steps. Step 1 constructs a second MDP and shows that, for every possible sample path in the first MDP, there exists a corresponding sample path in the second MDP. Step 2 proves that, the probability of any possible sample path in first MDP equals to the probability of its counterpart in the second MDP.
	
	\textbf{Step 1}:	
	Define $ S = \{1, \cdots, |S|\} $.
	For $ x,y \in S, a \in A_x$, define $ J = \supp(r(\cdot \mid x,a,y)) $. 
	Define $S^{\ddagger} = S^2 \times A \times J$.
	In order to remove the dependency of the initial state distribution on policy, define a null state space $ W = \{w_1, \cdots, w_{|S|}\} $, and $ W \cap S^{\ddagger} = \emptyset $.
	Define the state space $ S^{\dagger} = S^{\ddagger} \cup W $. 
	
	For all $x^{\dagger} = (x,a,y,j), y^{\dagger} = (y,a',y',j') \in S^{\dagger}, \text{ where } y' \in S, a' \in A_y, j,j' \in J$, define the state-based reward function $r^{\dagger}(j \mid x^{\dagger}, \cdot) = 1$, and $r^{\dagger}(0 \mid w_{\cdot}, \cdot) = 1$; 
	define the transition kernel $p^{\dagger}(y^{\dagger}  \mid  x^{\dagger}, a') = p(y'  \mid  y,a') r(j' \mid y,a',y')$, and $p^{\dagger}(x^{\dagger}  \mid  w_x, a) = p(y  \mid  x,a) r(j \mid x,a,y) $;
	the initial state distribution $\mu^{\dagger}(w_x) = \mu(x)$. 
	
	Now we have two MDPs. Let $MDP_1 = \langle S, A, r, p, \mu \rangle$, and $MDP_2 = \langle S^{\dagger}, A, r^{\dagger}, p^{\dagger}, \mu^{\dagger} \rangle$. 
	For any sample path $ (x_1, a_1, j_1, x_2, a_2, j_2, x_3, a_3, j_3, x_4\cdots) $ in $ MDP_1 $,
	there exists a sample path 
	\begin{gather*}
	    (w_{x_1}, a_1, 0, (x_1, a_1, j_1, x_2), a_2, j_1, (x_2, a_2, j_2, x_3), a_3,\\
	     j_2, (x_3, a_3, j_3, x_4), \cdots)
	\end{gather*}
	in $ MDP_2 $.
	Therefore, we proved that for every possible sample path for the first MDP, there exists a corresponding sample path for the second MDP. 
	
	\textbf{Step 2}:	
	Next we prove the probabilities for the two sample paths are equal.
	Here we prove it by mathematical induction. Set time epoch to $ n $ after the first $ x_{n+1} $ in $ MDP_1 $, and after the first $ (x_n, a_n, j_n, x_{n+1}) $ in $ MDP_2 $.
	
	Denote the partial sample path till epoch $ i $ by $ \iota_i $ in $ MDP_1 $. Given any $ \pi \in \Pi_R $ for $ MDP_1 $, the probability for the sample path before epoch $ 1 $ in $ MDP_1 $ is 
		\begin{gather*}
		    \mathbb{P}(\iota_1 = (x_1, a_1, j_1, x_2)) =\\ \mu(x_1) \pi(a_1 \mid x_1) p(x_2 \mid x_1,a_1) r(j_1 \mid x_1,a_1,x_2).
		\end{gather*}
	There exists a policy $ \pi^{\dagger} $ for $ MDP_2 $, with $ \pi^{\dagger}(a \mid w_x) = \pi(a \mid x) $, and $ \pi^{\dagger}(a \mid x^{\dagger}) = \pi(a \mid y) $.
	The probability for the sample path before epoch $ 1 $ in $ MDP_2 $ is 
	\begin{align*} 
	&\mathbb{P}(\iota^{\dagger}_1 = (w_{x_1}, a_1, 0, (x_1, a_1, j_1, x_2)))\\
	 &= \mu(w_{x_1}) \pi^{\dagger}(a_1 \mid w_{x_1}) p((x_1, a_1, j_1, x_2) \mid w_{x_1},a_1) \\
	&= \mathbb{P}(\iota_1 = (x_1, a_1, j_1, x_2)).
	\end{align*} 
	Therefore, at epoch $ 1 $, the two partial sample paths share the same probability. 
	
	Assuming that the two partial sample paths share the same probability at epoch $ n $, then the probability for the sample path before epoch $ n+1 $ in $ MDP_1 $ is
	\begin{align*}
	&\mathbb{P}(\iota_{n+1} = (\iota_n,x_{n+1}, a_{n+1}, j_{n+1}, x_{n+2})) \\
	&= \mathbb{P}(\iota_n = (x_1, \cdots ,x_n, a_n, j_n, x_{n+1})) \pi(a_{n+1} \mid x_{n+1}) \times \\ 
	&p(x_{n+2} \mid x_{n+1},a_{n+1}) r(j_{n+1} \mid x_{n+1},a_{n+1},x_{n+2}).
	\end{align*}
	The probability for the sample path before epoch $ n+1 $ in $ MDP_2 $ is 
	\begin{multline*}
	\mathbb{P}(\iota^{\dagger}_{n+1} = (\iota^{\dagger}_n, a_{n+1}, j_n, (x_{n+1}, a_{n+1}, j_{n+1}, x_{n+2}))) = \\ \mathbb{P}(\iota^{\dagger}_n = (w_{x_1}, \cdots ,(x_n, a_n, j_n, x_{n+1}))) \times \\ \pi^{\dagger}(a_{n+1} \mid (x_n, a_n, j_n, x_{n+1})) \times \\ 
	p((x_{n+1}, a_{n+1}, j_{n+1}, x_{n+2}) \mid (x_n, a_n, j_n, x_{n+1}),a_{n+1}) \times \\
	r(j_n \mid (x_n, a_n, j_n, x_{n+1}),a_{n+1}) \\ = \mathbb{P}(\iota_{n+1} = (\iota_n,x_{n+1}, a_{n+1}, j_{n+1}, x_{n+2})).
	\end{multline*}
	
	By induction we proved that, the probability of any possible sample path in $\langle S, A, r, p, \mu \rangle$ equals to the probability of its counterpart in $\langle S^{\dagger}, A, r^{\dagger}, p^{\dagger}, \mu^{\dagger} \rangle$. 
	As a subsequence of a sample path, the reward sequence $ \{R_t\} $ is preserved with a null reward at $ t=1 $. 
	Given the discount factor $ \gamma $, we can compensate this time drift effect simply by setting $ r^{\dagger}(x^{\dagger}) = r^{\dagger}(x^{\dagger})/\gamma$ to preserve the return distribution. 
	Theorem~\ref{TransMDP} is proved.
	
\end{proof}

It is worth noting that the SATs also works for some risk measures that are not law invariant, such as dynamic risk measures~\cite{ruszczynski2010risk}, since $ \{R_t\} $ can be preserved as well. It should also be pointed out that, since the state space is augmented, the SATs has an effect on the computational complexity. Denote the complexity for an algorithm by $ T(|S|, |A|) $, it becomes $ T(|S^2 \times A \times J| + |S|, |A|) $ when the SAT for Case 3 is implemented.

\subsection{SAT for Case 2 and 1}
Here, we present SATs for MDPs in policy evaluation settings.
Given an MDP with a randomized policy $ \pi \in \Pi_R $, the reward function is often simplified as well. Taking a deterministic state-based reward function $ r_{DS} $ for example, the reward function is simplified to $ \sum_{y \in S} \pi(a|x) r_{DS}(x,a), x,y \in S, a \in A_x$.
In order to preserve the reward sequence, 
one way is to consider action in a ``situation'' in Remark~\ref{remark1}. 
\begin{remark}[State augmentation in the transformations]
	In order to transform a Markov reward process with a $ r_{ST}$ 
	(or $ r_{SS}$, $ r_{DT}$) to the one with a $ r_{DS} $, and preserve the return sequence at the same time, a bijective mapping between a new ``augmented'' state space and the possible ``situation'' space is needed. For a Markov reward process with $ r_{ST}$, a possible situation can be defined by a tuple $ \langle x, y, j \rangle $, in which $x,y \in S, j \in \supp(r_{ST}(\cdot \mid x,y))$.
	\label{remark1}
\end{remark}

Next, we present the following corollaries for Case 2 and Case 1.

\begin{corollary} [Transformation for MDP with a randomized policy]
	For a Markov decision process $\langle S, A, r, p, \mu \rangle$ with $ r $ stochastic and transition-based, and a randomized policy $ \pi \in \Pi_R $, there exists a Markov reward process $\langle S^{\dagger}, r^{\dagger}_{\pi}, p^{\dagger}_{\pi}, \mu^{\dagger}_{\pi} \rangle$ with $r^{\dagger}_{\pi}$ deterministic and state-based, such that both processes share the same return sequence.
	\label{tC2}
\end{corollary}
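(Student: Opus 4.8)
The plan is to reduce the corollary to Theorem~\ref{TransMDP} rather than rebuild the construction from scratch. Given the MDP $\langle S, A, r, p, \mu \rangle$ with $r$ stochastic and transition-based, Theorem~\ref{TransMDP} furnishes an MDP $MDP_2 = \langle S^{\dagger}, A, r^{\dagger}, p^{\dagger}, \mu^{\dagger} \rangle$ with $r^{\dagger}$ deterministic and state-based, together with, for the given randomized policy $\pi \in \Pi_R$, a corresponding (stationary Markovian, possibly randomized) policy $\pi^{\dagger}$ for $MDP_2$ --- explicitly $\pi^{\dagger}(a \mid w_x) = \pi(a \mid x)$ and $\pi^{\dagger}(a \mid (x,a',y,j)) = \pi(a \mid y)$. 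I would then define $\langle S^{\dagger}, r^{\dagger}_{\pi}, p^{\dagger}_{\pi}, \mu^{\dagger}_{\pi} \rangle$ to be the Markov reward process induced by running $\pi^{\dagger}$ in $MDP_2$, i.e. $p^{\dagger}_{\pi}(y^{\dagger} \mid x^{\dagger}) = \sum_{a' \in A} \pi^{\dagger}(a' \mid x^{\dagger})\, p^{\dagger}(y^{\dagger} \mid x^{\dagger}, a')$, with $r^{\dagger}_{\pi} = r^{\dagger}$ and $\mu^{\dagger}_{\pi} = \mu^{\dagger}$ (policy-independent, thanks to the null states $W$).

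Next I would record the two observations that make this work. First, because $r^{\dagger}$ does not depend on the action, marginalizing the action leaves the reward unchanged, so $r^{\dagger}_{\pi} = r^{\dagger}$ is again deterministic and state-based, as required. Second, the proof of Theorem~\ref{TransMDP} already establishes the strong path-wise statement: each sample path of $MDP_1$ under $\pi$ is matched bijectively with a sample path of $MDP_2$ under $\pi^{\dagger}$ carrying the \emph{same} reward sequence $\{R_t\}$ (after the null reward at $t=1$ and the rescaling $r^{\dagger} \mapsto r^{\dagger}/\gamma$ that compensates the one-step drift), and the matched paths carry equal probability. Passing from $\langle S, A, r, p, \mu \rangle$ with $\pi$ to the Markov reward process $\langle S, r_{\pi}, p_{\pi}, \mu \rangle$, and from $MDP_2$ with $\pi^{\dagger}$ to $\langle S^{\dagger}, r^{\dagger}_{\pi}, p^{\dagger}_{\pi}, \mu^{\dagger}_{\pi} \rangle$, is exactly the action-marginalization step, so the path-wise reward-sequence identity descends verbatim to the two Markov reward processes; hence they share the same reward sequence $\{R_t\}$, and a fortiori the same return $\sum_{t} \gamma^{t-1} R_t$, which is the claim.

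The only point that needs care --- the ``main obstacle,'' such as it is --- is the commutativity between the two operations ``apply the policy'' and ``perform the state augmentation'': one must verify that inducing the Markov reward process first and then transforming yields the same object as transforming first and then inducing with $\pi^{\dagger}$. This is immediate from the explicit forms in the proof of Theorem~\ref{TransMDP}, since $p^{\dagger}(y^{\dagger} \mid x^{\dagger}, a')$ was built out of $p(y' \mid y, a')\, r(j' \mid y, a', y')$ and $\pi^{\dagger}$ copies $\pi$ componentwise, so summing against $\pi^{\dagger}$ reproduces $p_{\pi}$ and $r_{\pi}$ in the augmented coordinates; stationarity and the time-homogeneity of the induced process follow because $\pi^{\dagger}$ depends only on the current augmented state. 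I would optionally remark that, since the policy is now fixed, the leaner ``situation'' space $\langle x, y, j \rangle$ of Remark~\ref{remark1} (dropping the action coordinate) already suffices, reducing the blow-up from $|S^{2} \times A \times J| + |S|$ to $|S^{2} \times J| + |S|$, and the argument above goes through unchanged with this smaller $S^{\dagger}$.
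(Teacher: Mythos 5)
Your proposal is correct and matches the paper's intent: the paper gives no separate proof for Corollary~\ref{tC2}, stating instead that all cases follow from the constructive proof of Theorem~\ref{TransMDP} with slight changes, which is exactly your reduction (fix $\pi$, take the corresponding $\pi^{\dagger}$, and marginalize over actions to obtain the induced Markov reward process). Your closing remark about dropping the action coordinate when the policy is fixed is likewise consistent with the paper's Remark~\ref{remark1}.
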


Similarly, we present a corollary for Case 1, which is a special case of Corollary~\ref{tC2}.

\begin{corollary} [Transformation for Markov process with a stochastic transition-based reward function]
	For a Markov reward process $\langle S, r_{\pi}, p_{\pi}, \mu_{\pi} \rangle$ with $ r_{\pi} $ stochastic and transition-based, there exists a Markov reward process $\langle S^{\dagger}, r^{\dagger}_{\pi}, p^{\dagger}_{\pi}, \mu^{\dagger}_{\pi} \rangle$ with $r^{\dagger}_{\pi}$ deterministic and state-based, such that both processes share the same return sequence.
	\label{tC1}
\end{corollary}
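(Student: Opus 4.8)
The plan is to obtain this as a specialization of Theorem~\ref{TransMDP} (equivalently of Corollary~\ref{tC2}), with the action coordinate deleted. The cleanest route is to observe that a Markov reward process is just an MDP with a fixed policy, so Case~1 $\prec$ Case~2: view $\langle S, r_\pi, p_\pi, \mu_\pi \rangle$ as the MDP $\langle S, \{a_0\}, r, p, \mu \rangle$ with a singleton action set, $r(j\mid x,a_0,y)=r_\pi(j\mid x,y)$, $p(y\mid x,a_0)=p_\pi(y\mid x)$, $\mu=\mu_\pi$, and the unique (hence randomized) policy $\pi(a_0\mid x)=1$; then Corollary~\ref{tC2} applies verbatim. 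For a self-contained argument I would instead replay the construction in the proof of Theorem~\ref{TransMDP} directly. That is, for $x,y\in S$ put $J_{x,y}=\supp(r_\pi(\cdot\mid x,y))$, take the ``situation'' space $S^{\ddagger}=\{(x,y,j):x,y\in S,\ j\in J_{x,y}\}$ of Remark~\ref{remark1}, adjoin null states $W=\{w_1,\dots,w_{|S|}\}$ disjoint from it, and set $S^{\dagger}=S^{\ddagger}\cup W$. Define the deterministic state-based reward by $r^{\dagger}((x,y,j))=j/\gamma$ and $r^{\dagger}(w_x)=0$; the kernel by $p^{\dagger}_\pi((y,y',j')\mid(x,y,j))=p_\pi(y'\mid y)\,r_\pi(j'\mid y,y')$ and $p^{\dagger}_\pi((x,y,j)\mid w_x)=p_\pi(y\mid x)\,r_\pi(j\mid x,y)$, with all other transitions zero; and the initial law by $\mu^{\dagger}_\pi(w_x)=\mu_\pi(x)$.

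The verification would then proceed in the same two steps as Theorem~\ref{TransMDP}. First, to every trajectory $(x_1,j_1,x_2,j_2,x_3,\dots)$ of the original process, with $j_t=R_t$, I would associate the trajectory $\bigl(w_{x_1},(x_1,x_2,j_1),(x_2,x_3,j_2),\dots\bigr)$ of the augmented process, note that every transition along it has positive probability by construction, and check that the correspondence is a bijection onto augmented trajectories starting in $W$. The rewards emitted along the image are $0,\,j_1/\gamma,\,j_2/\gamma,\dots$ at epochs $1,2,3,\dots$, whose discounted sum is exactly $\sum_{t\ge1}\gamma^{t-1}j_t$, so the return — and, up to the prepended null reward, the reward sequence — is preserved pathwise. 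Second, I would prove by induction on the epoch $n$ that a partial trajectory of the original process and its image receive equal probability: the base case is $\mu^{\dagger}_\pi(w_{x_1})\,p^{\dagger}_\pi((x_1,x_2,j_1)\mid w_{x_1})=\mu_\pi(x_1)\,p_\pi(x_2\mid x_1)\,r_\pi(j_1\mid x_1,x_2)$, and the inductive step uses the Markov property together with the fact that the single factor $p^{\dagger}_\pi((x_n,x_{n+1},j_n)\mid(x_{n-1},x_n,j_{n-1}))$ equals the product $p_\pi(x_{n+1}\mid x_n)\,r_\pi(j_n\mid x_n,x_{n+1})$ contributed at epoch $n$ in the original process. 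Equality of all trajectory probabilities then yields equality of the laws of $\{R_t\}$, hence of the return distributions.

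I expect essentially no serious obstacle: the statement is a specialization of Theorem~\ref{TransMDP} and the argument is inherited line by line. The only points that warrant attention are (i) keeping the support sets $J_{x,y}$ genuinely $(x,y)$-dependent, so that $S^{\ddagger}$ does not carry impossible ``situations'' (or, equivalently, building $S^{\ddagger}$ from the full product $S^2\times J$ and letting the spurious states have zero probability, as in the proof of Theorem~\ref{TransMDP}), and (ii) the one-epoch time shift produced by routing the process through the null states, which is exactly what the factor $1/\gamma$ in $r^{\dagger}$ compensates. A minor remark worth making is that, since Case~1 involves no policy, the null states are not actually necessary here: one could instead take $\mu^{\dagger}_\pi((x,y,j))=\mu_\pi(x)\,p_\pi(y\mid x)\,r_\pi(j\mid x,y)$ and drop the rescaling, recovering the reward sequence with no shift at all; retaining $W$ merely keeps the construction uniform with Cases~2 and~3.
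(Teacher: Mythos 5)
Your proposal is correct and matches the paper's intended argument: the paper gives no separate proof for this corollary, stating only that it follows from the constructive proof of Theorem~\ref{TransMDP} with slight changes (drop the action coordinate, keep the augmented ``situation'' states, null states, and the $1/\gamma$ compensation), which is exactly what you do. Your closing observation that the null states are dispensable in Case~1 by pushing the first transition into $\mu^{\dagger}_{\pi}$ is a valid minor simplification beyond what the paper records.
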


\subsection{SAT for Case 0}
When we use Q-learning, or other Q-function (value function) based learning methods in an MDP with a reward function which is not deterministic and state-based, it implies a reward simplification similar to Equation~(\ref{simplifiedR}).
This simplification changes the reward distribution as well as a risk measure. 
In this subsection, we show the effect on the distribution from the reward simplification. We apply the state-transition transformation~\cite{Ma2017STT}---which is the SAT for Case 0---in an infinite-horizon with a discount factor setting, estimate the reward distribution and the VaR result, and compare them with the ones from reward simplification.

Figure~\ref{MdpDesc} describes an MDP with transition-based reward function and its counterpart with state-based reward function following reward simplication.
In this MDP, consider a deterministic policy $ \pi = [2,1,0] $---to order 2, 1, 0 item(s) when the inventory level is 0, 1, 2, respectively---then we have a Markov reward process illustrated in Figure~\ref{tMrp}(a). 
In order to attach each possible reward value to a state to construct a $ r_{DS} $, we consider each transition as a state, and apply the \textit{State-Transition Transformation} (Algorithm~\ref{STT}~\cite{Ma2017STT}). 
The transformed Markov reward process is presented in Figure~\ref{tMrp}(b), where only some of the transitions are shown.
Given the same Markov reward process without the transformation, the reward function is often simplified by Equation~(\ref{simplifiedR}), which only preserves the expectation. 

\begin{algorithm}[t]
   \caption{State-Transition Transformation (for Case 0)}
   \label{STT}
\begin{algorithmic}
   \STATE {\bfseries Input:} Markov reward process $\langle S, r_{\pi}, p_{\pi}, \mu\rangle$.
   \STATE {\bfseries Output:} Markov reward process $\langle S^{\dagger}, r^{\dagger}_{\pi}, p^{\dagger}_{\pi}, \mu^{\dagger}\rangle$.
   \STATE Generate the state space $S^{\dagger} = S \times S$;
   \FOR{{\bfseries all} $x^{\dagger} = (x,y)$ {\bfseries where} $x,y \in S$}
   \STATE Construct the reward function $r^{\dagger}_{\pi}(x^{\dagger}) = r_{\pi}(x,y)$;
   \STATE Construct the transition kernel \\
      		$p^{\dagger}_{\pi}(x^{\dagger}  \mid  y^{\dagger}) = p_{\pi}(y  \mid  x)$ for all $y^{\dagger}=(\cdot,x) \in S^{\dagger}$, and $p^{\dagger}_{\pi}(x^{\dagger}  \mid  y^{\dagger}) = 0$ otherwise;
   \STATE Set the initial state distribution \\ $\mu^{\dagger}(x^{\dagger}) = \mu(x)p_{\pi}(y  \mid  x)$;
   \ENDFOR
\end{algorithmic}
\end{algorithm}
\begin{figure}[b!]
	\centering
	\centerline{\includegraphics[scale=0.30]{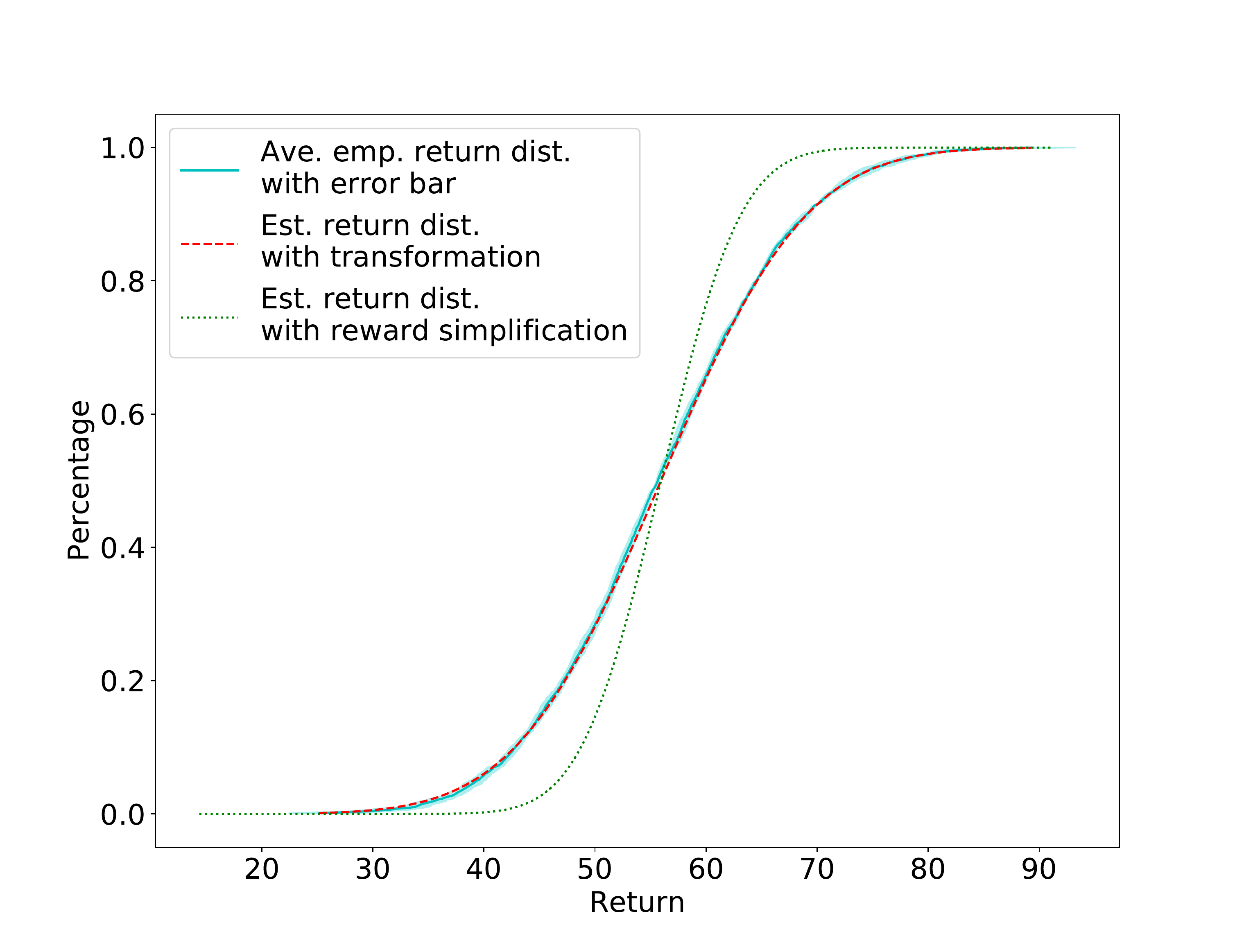}}
	\caption{Comparison among the averaged empirical return distribution with error region, the estimated return distribution from the transformation, and the estimated return distribution from the reward simplification.}
	\label{dist3errorbar}
\end{figure}
Now we set $ \gamma = 0.95 $ and compare the two return distributions---one from the transformation and the other from the reward simplification---with the averaged empirical return distribution.
Since there is no central limit theorem for discounted Markov processes, to simplify the estimation, we estimate the return distribution by involving mean and variance only, which implies the assumption that the return is approximately normal.
With the aids of Theorem~\ref{theorem1982}, the two distributions are shown in Figure~\ref{dist3errorbar}.
The averaged empirical return distribution is from a simulation repeated 50 times with a time horizon 1000, with the error region representing the standard deviations of the means along return axis. 
The Kolmogorov--Smirnov statistic $ D_{KS} $~\cite{durbin1973distribution} is used to quantify the distribution difference (error). Denote the averaged empirical return distribution by $ \hat{F}^{\pi}_{\Phi} $, the estimated distribution for the transformed process by $ Q^{\pi}_{\Phi} $, and the estimated distribution for the process with the reward simplification by $ Q'^{\pi}_{\Phi} $. For the case in Figure~\ref{dist3errorbar}, $ D_{KS} (Q'^{\pi}_{\Phi}, \hat{F}^{\pi}_{\Phi}) = \sup_{\phi \in \mathbb{R}} |Q'^{\pi}_{\Phi}(\phi) - \hat{F}^{\pi}_{\Phi}(\phi)| \approx 0.145$, and $ D_{KS} (Q^{\pi}_{\Phi}, \hat{F}^{\pi}_{\Phi}) \approx 0.012$. The results show that, the reward simplification leads to a nontrivial estimation error ($ 0.145 > 0.012 $).
Notice that the direct use of Q-learning will result in a risk measure on a learned distribution, which is supposed to converge to the estimated distribution with the reward simplification.

\begin{figure}[b!]
	\centering
	\centerline{\includegraphics[scale=0.30]{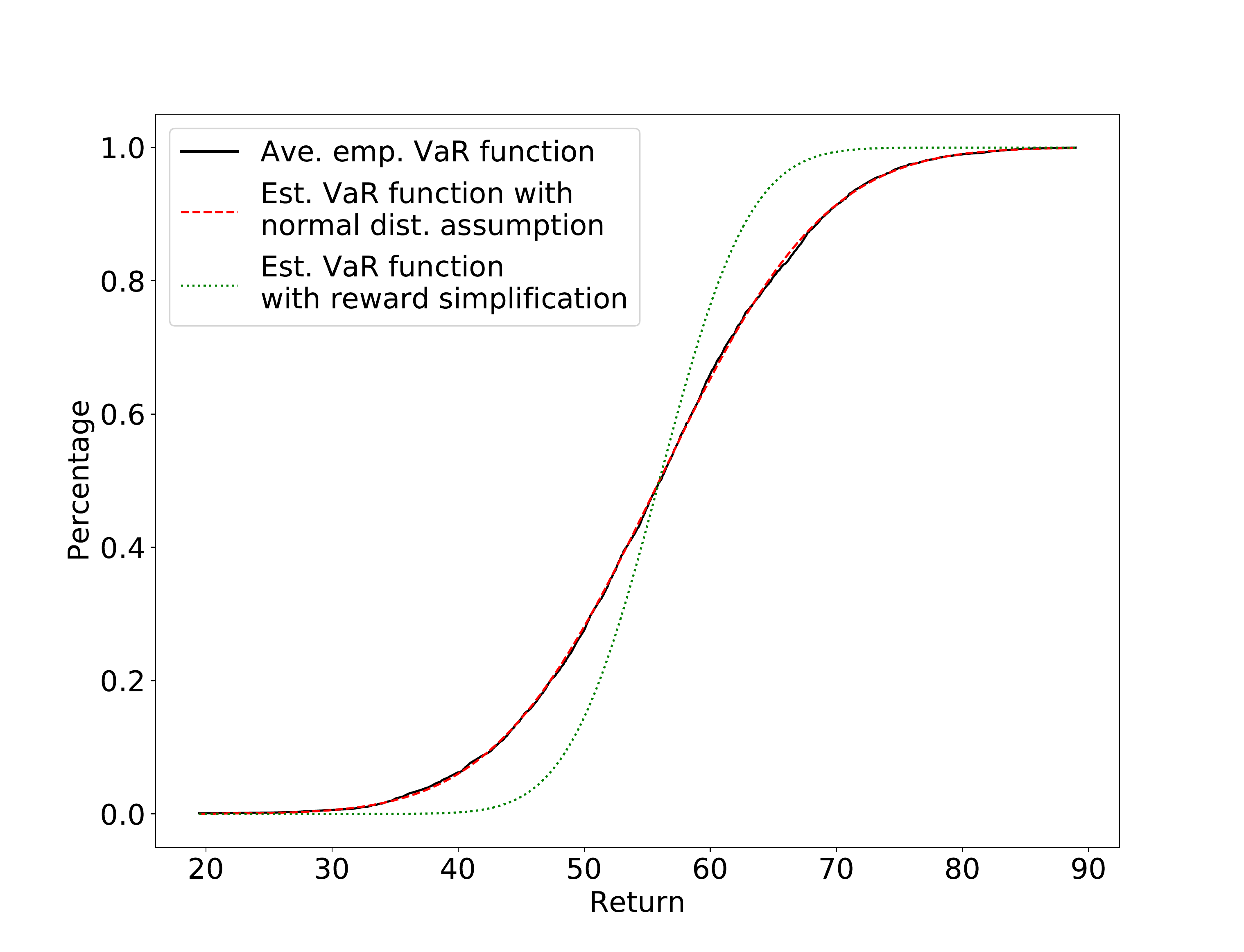}}
	\caption{Comparison among the averaged empirical VaR function, the estimated VaR function from the transformation, and the estimated VaR function from the reward simplification.}
	\label{VaR3}
\end{figure}

Now consider the VaR objective (VaR Definition~\ref{VaR2} for example). The VaR function is obtained by enumerating the deterministic policies. Figure~\ref{VaR3} shows the two estimated VaR functions. Since the VaR function can also be regarded as a return distribution, we can still use $ D_{KS} $ to measure the error from the reward simplification, and in this case $ D_{KS} \approx 0.150 $. Denote the optimal quantile for the MDP with the reward simplification by $ \hat{\eta}_{\tau} $, then the error bound for the optimal quantile $ \sup\{|\hat{\eta}_{\tau} - \eta_{\tau}|:\tau \in \mathbb{R}\} \approx 0.150$, which is nontrivial in this risk-sensitive problem.

\begin{remark}[Smaller variance from reward simplification]
	In Figure~\ref{dist3errorbar} we can tell that the distribution for the process with the  reward simplification has a smaller variance. The reason can be intuitively explained by the analysis of variance~\cite{scheffe1999analysis}. Taking the deterministic transition-based reward function for example. Considering the possible rewards for the same current state as a group, the variance for $ R_t $ includes the variances between groups and the variances within groups. When the reward function is simplified with Equation~(\ref{simplifiedR}), the variances within groups are removed, so the variance is smaller. 
\end{remark}

From the inventory example we can see that, some methods require Markov processes to be with deterministic and state-based reward functions only, and the reward simplification changes the reward sequences (distributions).
If we want to use Q-learning, or other methods for Markov processes with $ r_{DS} $ in a risk-sensitive scenario, we should implement an appropriate SAT first.

\section{Related Works}
The risk concerns arise in RL in two aspects. One refers to the ``external'' uncertainty in the model parameters, and this problem is known as robust MDPs. In the robust MDPs people optimize the expected return with the worst-case parameters, which belongs to a set of plausible MDP parameters. For example, an MDP with uncertain transition matrices~\cite{nilim2005robust}.
The other one refers to the ``internal'' risk, which studies the stochastic property of the process. 
In general, there are three internal risk-sensitive objective classes in RL area, which have been studied for decades. 
One is the mean-variance risk measure~\cite{D.J.White1988a,doi:10.1287/opre.42.1.175,Mannor2011a}, also known as the modern portfolio theory, in which the expected return is maximized with a given risk level (variance). 
The other is target-percentile risk measure~\cite{Filar1995b,Wu1999a}. 
This risk measure formulate the objective in terms of the probabilities of certain targets (or quantiles), and optimize, for example, the expected return. 
Another is utility risk measure, in which the exponential utility function is concerned.
The internal risk concerns arise not only mathematically but also psychologically. 
A classic example in psychology is the ``St. Petersburg Paradox,'' which refers to a lottery with an infinite expected reward, but people only prefer to pay a small amount to play. 
This problem is thoroughly studied in utility theory, and a recent study brought this idea to RL~\cite{Prashanth2015}.
Some risk measures are coherent~\cite{Artzner1998a}, which share some intuitively reasonable properties (convexity, for example). Ruszczy{\'{n}}ski and Shapiro~\shortcite{Ruszczynski2006a} presented a thorough study on coherent risk optimization. 

 
Q-learning has been studied in risk-sensitive RL for decades.
Many risk-sensitive Q-learning studies are for MDPs with an $ r_{DS} $.
Borkar~\shortcite{borkar2002q} proved the convergence of Q-learning for an exponential utility cost objective with an ordinary differential equation method.
A trajectory-based algorithm which combines policy gradient and actor-critic methods was presented to solve a CVaR-constrained problem~\cite{chow2017risk}.
For robust MDP problems, with considering a set of general uncertainties (random action, unknown cost and safety hazards), an approach was provided to compute safe and optimal strategies iteratively~\cite{junges2016safety}.
Q-learning has also been used to provide risk-sensitive analysis on the fMRI signals, which provides a better interpretation of the human behavior in a sequential decision task~\cite{shen2014risk}.
The expectation-based worst case risk measure might not need the proposed SATs. For example, the minimax risk measures studied in~\cite{huang2017risk}. 
For a comprehensive survey on safe RL, see~\cite{garcia2015comprehensive}.

\section{Conclusion}
The proposed SATs transform MDPs and Markov reward processes with stochastic transition-based reward functions into ones with deterministic state-based reward functions, and preserve the reward sequences (and distributions) for risk-sensitive objectives.
In an infinite-horizon time-homogeneous MDP for an inventory control problem, we illustrate the error on the distribution from the reward simplification.
Taking the advantage of the variance formula presented by Sobel~\shortcite{sobel1982variance}, we estimate the return distribution for a Markov reward process. 
Since many RL methods require the reward function to be deterministic and state-based, the transformation is needed for the MDPs with other types of reward functions in the risk-sensitive problems.
We generalize the transformation~\cite{Ma2017STT} in different settings, and consider VaR as an example to show the effect of reward simplification on distribution. 

In many practical problems, the rewards are simplified to deterministic and state-based.
When the MDP is with a reward function which is not an $ r_{DS} $ type, the direct use of Q-learning also implies such a reward simplification.
In this paper, the error from the reward simplification on distribution is illustrated, which is crucial in risk-sensitive cases. 
By implementing the transformation instead of the reward simplification, the MDPs with different types of reward functions and (or) randomized policies are transformed to the ones with deterministic and state-based reward functions with an intact reward distribution. 

The essence of the SATs is to attach each possible reward value to an augmented state to preserve the reward sequence. 
This attachment is crucial in risk-sensitive reinforcement learning, considering the wide application of value function and Q-function.
Without the proposed SATs, the learned Q-function (value function) only estimates the expected value of a given state-action pair (state) when the reward function is (stochastic) transition-based.
Now with the SATs, the Q-function (value function) can be considered as an approximation of the ``real'' value of a given augmented state-action pair (state).
In other words, the proposed transformations ``transform'' the uncertainties from the transition, action, and the stochasticity of the reward function to the augmented state space.
The proposed SATs present a platform for Q-learning in risk-sensitive RL, and we believe that many related studies should be revisited with the proposed SATs instead of applying the common-used reward simplification directly.

\section{Acknowledgments}
This research was supported in part by the scholarship from China Scholarship Council (CSC), and the Natural Sciences and Engineering Research Council of Canada (NSERC) under Grants 509935 and 512046.
\bibliography{ref_ma}
\bibliographystyle{aaai}
\end{document}